\newcommand{\LTLEVENTUALLY}{\ensuremath{ F}}
\newcommand{\LTLALWAYS}{\ensuremath{ G }}
\newcommand{\tuple}[1]{\ensuremath{\left \langle #1 \right \rangle }}
\newcommand{\customfootnotetext}[2]{{
  \renewcommand{\thefootnote}{#1}
  \footnotetext[0]{#2}}}
\title[STL Learning]{Tractable Reinforcement Learning of Signal Temporal Logic Objectives}
\author{%
 \Name{Harish Venkataraman} \Email{kumaa001@umn.edu}
 \AND
 \Name{Derya Aksaray} \Email{daksaray@umn.edu}
 \AND
 \Name{Peter Seiler} \Email{seile017@umn.edu}\\
 \addr Aerospace Engineering and Mechanics, University of Minnesota, Minneapolis, MN, USA
}
\begin{document}

\maketitle
\begin{abstract}%
 Signal temporal logic (STL) is an expressive language to specify time-bound real-world robotic tasks and safety specifications. Recently, there has been an interest in learning optimal policies to satisfy STL specifications via reinforcement learning (RL). Learning to satisfy STL specifications often needs a sufficient length of state history to compute reward and the next action. The need for history results in exponential state-space growth for the learning problem. Thus the learning problem becomes computationally intractable for most real-world applications. In this paper, we propose a compact means to capture state history in a new augmented state-space representation. An approximation to the objective (maximizing probability of satisfaction) is proposed and solved for in the new augmented state-space. We show the performance bound of the approximate solution and compare it with the solution of an existing technique via simulations.  
\end{abstract}

\begin{keywords}%
Reinforcement Learning (RL), Formal Methods,  Signal Temporal Logic (STL)
\end{keywords}

\section{Introduction}

Reinforcement learning (RL) for controlling unknown or partially known stochastic dynamical systems to satisfy complex time-bound objectives has gained good momentum in the robotics community, e.g., use of an aerial vehicle for infrastructure inspection or environmental monitoring while maintaining a sufficient state of charge throughout the mission. Such complex objectives can be rigorously expressed by temporal logics (TL). 

In the literature, there are numerous works on model-based control synthesis for the satisfaction of TL specifications (e.g., \cite{ding2014,sadigh2014,Fu2014TLRL,lahijanian2015,aksaray2015}). Recently, model-free learning paradigm to satisfy TL specifications has also gained a significant interest (e.g., using RL to find policies that maximize the probability of satisfying a given linear temporal logic (LTL), \cite{Brazdil2014,sadigh2014,Fu2014TLRL,Xiao2017_long,Xiao2019_long,Zhe2019_long}). $Q$-learning, a variant of RL has also been shown to be successful in learning policies satisfying signal temporal logic specifications.[\cite{aksaray2016q}].

Signal temporal logic (STL), is a rich predicate logic that can specify bounds on physical parameters and time intervals [\cite{maler2004}]. For example, an autonomous UAV needs to recharge itself periodically (every 15 minutes in a 2-hour mission) to avoid crashing onto the ground. Such a time-bound objective can be expressed by STL as $G_{[0,105 \  min]}F_{[0,15 \ min]}f(x,y) \in \mathcal{C}$ where $f(x,y) \in \mathcal{C}$ refers to the vehicle position $(x,y)$ being inside a recharging station. 

STL does not have a graph representation such as an automaton to book-keep history. Thus, \cite{aksaray2016q} constructed a higher dimensional Markov Decision Process (MDP) model, known as a $\tau$-MDP, for learning.  This $\tau$-MDP model stores the state history (including the current state) over a finite window of length $\tau$ and enables to compute a reward and action at each time step. For instance, if the specification requires visiting region $B$ after region $A$ within $10$-time steps, then STL satisfaction can be verified with the knowledge of at least $10$-time steps. The number of states in the $\tau$-MDP model grows exponentially with the size of $\tau$. For example, if the original MDP has $m$ states, $\tau$-MDP has $m^{\tau}$ states. This state-space explosion renders learning on $\tau$-MDP model impractical for real-world robotics problems with large state-space and long STL horizons.   

The primary focus of this paper is to provide a new augmented system on which learning to satisfy STL specifications is more computationally tractable and thus could scale to problems with longer STL horizons. The basic idea is that both rewards and actions can be computed without exact state history. The reward and the next action can be computed based on the current state and newly defined notion of flags. The flags capture the historic knowledge of the partial satisfaction for each STL sub-formula constituting the STL specification. The new augmented system is defined as a new MDP known as $F$-MDP, which holds the actual system states and the flag states. 

We formulate a learning problem over $F$-MDP in place of $\tau$-MDP [\cite{aksaray2016q}] and propose a technique to learn the optimal policy maximizing the probability of satisfaction. The proposed technique is shown to have polynomial space complexity as a function of $\tau$. Empirical results also support faster learning due to the compactness of $F$-MDP. The rest of this paper is organized as follows: Sections 2 introduces the key concepts, Section 3 defines the problem formally, Section 4 describes the proposed technique in detail, analyzes the performance and computational complexity of the proposed technique, Section 5 provides simulation results and finally Section 6 concludes with future prospects.

\section{Preliminaries}
\subsection{Signal Temporal Logic (STL)}
In this paper, the desired system behavior is described by an STL fragment with the following \em syntax \em
\begin{equation}
\label{STLsyntax}
\begin{array}{rl}
\Phi &:= F_{[a, b]} \phi | G_{[a, b]} \phi \\
\phi &:= \phi \wedge \phi | \phi \vee \phi | F_{[c, d]} \varphi | G_{[c, d]} \varphi \\
\varphi &:= \psi |\neg \varphi | \varphi \wedge \varphi | \varphi \vee \varphi,
\end{array}
\end{equation}
\noindent where $a,b,c,d \in \mathbb{R}_{\geq 0}$ are finite non-negative time bounds; $\Phi$, $\phi$, and $\varphi$ are STL formulae; $\psi$ is predicate in the form of $f(\mathbf{s}) < d$ where $\mathbf{s}: \mathbb{R}_{\geq 0} \rightarrow \mathbb{R}^n$ is a 
signal, $f: \mathbb{R}^n \rightarrow \mathbb{R}$ is a function, and $d \in \mathbb{R}$ is a constant.    
The Boolean operators $\neg$, $\wedge$, and $\vee$ are negation, conjunction (i.e., {\it and}), and disjunction (i.e., {\it or}), respectively. The temporal operators $F$ and $G$ refer to {\it Finally} (i.e., eventually)  and {\it Globally} (i.e., always), respectively. The reader is refered to \cite{maler2004} for details on STL.

For any signal $\mathbf{s}$, let $s_t$ denote the value of $\mathbf{s}$ at time $t$ and let $(\mathbf{s},t)$ 
be the part of the signal that is a sequence of $s_{t^\prime}$ for $t^\prime \in [t,\infty)$. Accordingly, the \em Boolean semantics \em of STL is recursively defined as follows: 
\begin{equation*}\small
 \begin{array}{lllll}
(\mathbf{s},t) \models (f(\mathbf{s})< d) & \Leftrightarrow & f\left(s_t\right) < d & \Leftrightarrow & ((\mathbf{s},t) \models f\left(s_t\right) < d) = 1,\\
(\mathbf{s},t) \models \neg (f(\mathbf{s})< d) &  \Leftrightarrow & \neg \big ( (\mathbf{s},t) \models (f(\mathbf{s})< d) \big) & \Leftrightarrow & ((\mathbf{s},t) \models f\left(s_t\right) < d) = 0, \\
(\mathbf{s},t) \models \phi_1  \wedge \phi_2  &  \Leftrightarrow & (\mathbf{s},t) \models \phi_1 
\text{ and } (\mathbf{s},t) \models \phi_2 & \Leftrightarrow & (\min\{(\mathbf{s},t) \models \phi_1,(\mathbf{s},t) \models \phi_2\})=1, \\
(\mathbf{s},t) \models \phi_1  \vee \phi_2  & \Leftrightarrow  & (\mathbf{s},t) \models \phi_1 
\text{ or } (\mathbf{s},t) \models \phi_2 & \Leftrightarrow &  (\max\{(\mathbf{s},t) \models \phi_1,(\mathbf{s},t) \models \phi_2\})=1, \\
(\mathbf{s},t) \models  \LTLALWAYS_{[a,b]} \phi  &  \Leftrightarrow  & (\mathbf{s},t') \models 
\phi \quad \forall t' \in [t+a,t+b] & \Leftrightarrow &  (\underset{t' \in
[t+a,t+b]}{\min}(\mathbf{s},t^\prime) \models \phi)=1, \\
(\mathbf{s},t) \models \LTLEVENTUALLY_{[a,b]} \phi &  \Leftrightarrow  &
\exists t'
\in [t+a,t+b] \; \text{ s.t. } (\mathbf{s},t') \models 
\phi & \Leftrightarrow &  (\underset{t' \in
[t+a,t+b]}{\max}(\mathbf{s},t^\prime) \models \phi)=1. 
 \end{array}
\end{equation*}
where $( (\mathbf{s},t) \models f\left(s_t\right) < d) = 1$ implies that $f\left(s_t\right) < d$ is true and $((\mathbf{s},t) \models f\left(s_t\right) < d) = 0$ implies that $f\left(s_t\right) < d$ is false, with $0,1 \in \mathbb{R}$. 
For a signal $(\mathbf{s},0)$, i.e., the whole signal starting from time $0$, satisfying $F_{[a,b]} \phi$ means that ``there exists a time within $[a,b]$ such that $\phi$ will eventually be true'', and satisfying $G_{[a,b]} \phi$ means that ``$\phi$ is true for all times between $[a,b]$''.

As in \cite{dokhanchi2014line}, let $hrz(\phi)$ denote the \emph{horizon} of an STL formula $\phi$, which is the required number of samples to resolve any (future or past) requirements 
of $\phi$. The horizon can be computed recursively as
\vskip-1ex
\begin{equation*}
 \begin{array}{rll}
hrz\left( \psi \right) &=& 0, \\
hrz(\phi)&=&b \quad\;\;\; \ \ \ \text{ if } \phi= G_{[a,b]} \psi \text{ or } F_{[a,b]} \psi, \\
hrz(F_{[a,b]}\phi)=hrz(G_{[a,b]}\phi) &=&b+hrz(\phi), \\ 
hrz(\neg \phi) &= &hrz(\phi),\\
hrz(\phi_1 \wedge \phi_2) = hrz(\phi_1 \vee \phi_2) &=& \max\{hrz(\phi_1),hrz(\phi_2) \}, \\
 \end{array}
\end{equation*} 
where $a,b\in \mathbb{R}_{\geq 0}$, $\psi$ is a predicate, and $\phi,\phi_1,\phi_2$ are STL formulae.

STL formula $\Phi$ as defined in \eqref{STLsyntax} allows at most three layers of nesting. The first layer $\Phi$ constitutes of just $F_{[a,b]}\phi$ or $G_{[a,b]}\phi$. The second layer $\phi$ constitutes of more than one STL fragment $\phi_i$ in conjunction using logical operators and their order of precedence of operation. The third layer $\varphi_i$ in $\phi_i$ allows more than one predicate in conjunction using logical operators and their order of precedence of operation. The index variable $i$ is used to specify each STL fragment in the second layers of $\Phi$. This formulation allows for specifying most complex time-bound objectives or safety specifications, involving asymptotic and/or periodic behavior. Throughout the paper, we call $\Phi$ an STL formula and its constituent $\phi_i$ is called the $i^{th}$ STL sub-formula. 

\begin{example}
Consider the regions A and B illustrated in Fig.~\ref{fig:ex1} and a specification as ``visit regions $A$ and $B$ every 3 minutes along a mission horizon of
10 minutes''. Note that the desired specification can be formulated in STL as 
 \begin{equation}
 \label{navSpec}
\begin{array}{ll}
\Phi = & G_{[0,7]} \phi\\
\phi =  & F_{[0,3]}  (s>5 \wedge s<6)\wedge F_{[0,3]} (s>1 \wedge s<2).
\end{array}
\end{equation}
The horizon of $\Phi$ and $\phi$ are $hrz(\Phi)=10$ and $hrz(\phi)=3$, respectively. Let $\phi = \phi_1 \wedge \phi_2$ be made of $\phi_1 =F_{[0,3]}\varphi_1$, where $\varphi_1 = (s>5) \wedge (s<6)$ and $\phi_2 =F_{[0,3]}\varphi_2$, where $\varphi_2 = (s>1) \wedge (s<2)$. Then satisfying $\Phi$ implies satisfying $\bigwedge\limits_{t \in [0,7]} (F_{[t,t+3]} \varphi_{1} \wedge F_{[t,t+3]} \varphi_{2})$. $\Phi$ is the STL formula with constituent sub-formulae $\phi_1$ and $\phi_2$. Let $\mathbf{s^1}$ and $\mathbf{s^2}$ be two signals as illustrated in Fig.~\ref{fig:ex1}. The signal $\mathbf{s^1}$ satisfies $\Phi$ because $A$ and $B$ are visited within $[t,t+3]$ for every $t \in [0,7]$. However, the signal $\mathbf{s^2}$ violates $\Phi$ because region $B$ is not visited within $[0,3]$. Moreover, the satisfaction of $\mathbf{s}$ with respect to $\Phi$ can be computed via the quantitative semantics as follows:
\begin{equation}
\min_{t \in [0,7]} \min \Big\{ \max_{t^\prime \in [t,t+3]} \{(\mathbf{s},t^\prime) \models \varphi_1\}, \max_{t^\prime \in [t,t+3]} \{ (\mathbf{s},t^\prime) \models \varphi_2 \} \Big\}.
\label{eq:robustness_ex}
\end{equation}  
\begin{figure}[htb!]
 \begin{center}
 \includegraphics[height=0.25\textwidth, trim=0.3cm 0cm 0.2cm 0cm]{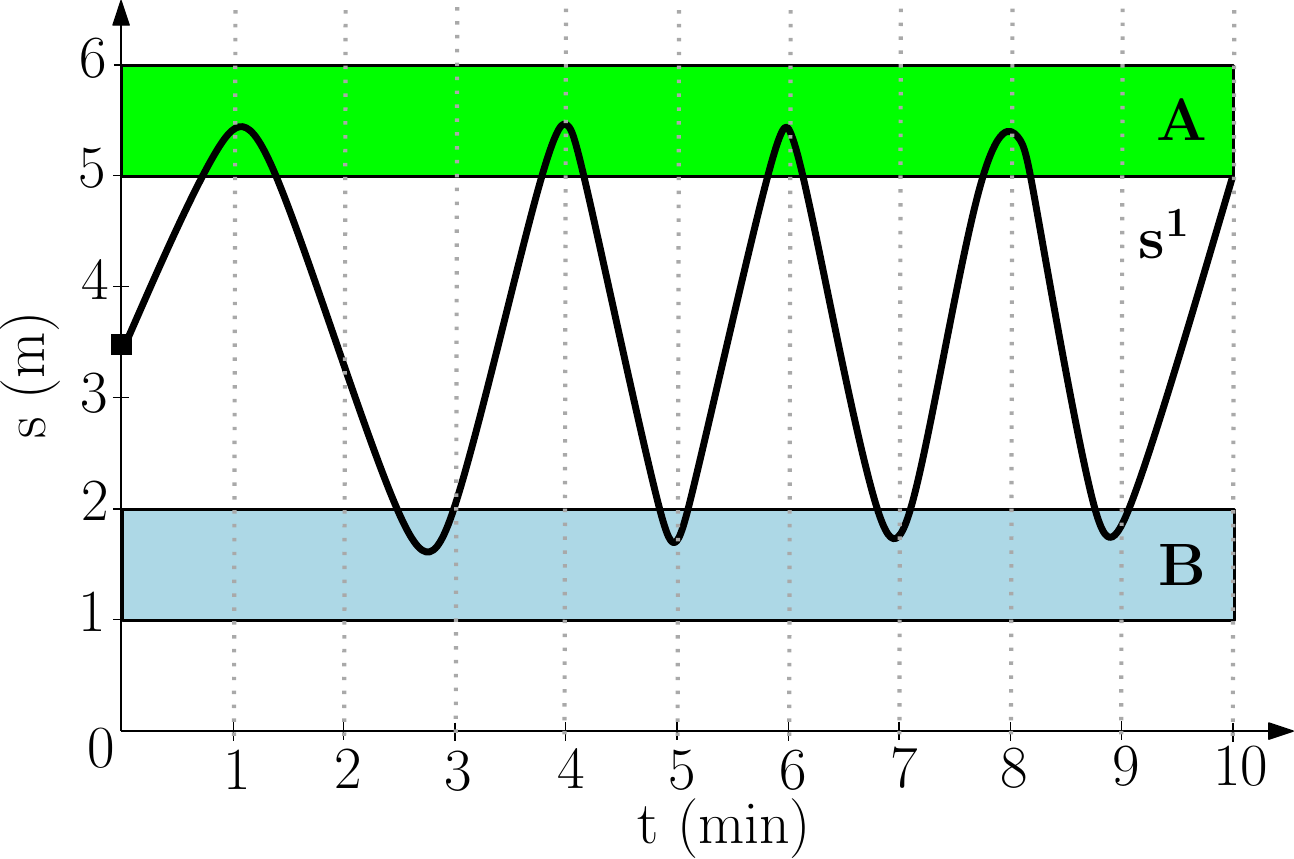} \quad \quad
  \includegraphics[height=0.25\textwidth, trim=0.3cm 0cm 0.2cm 0cm]{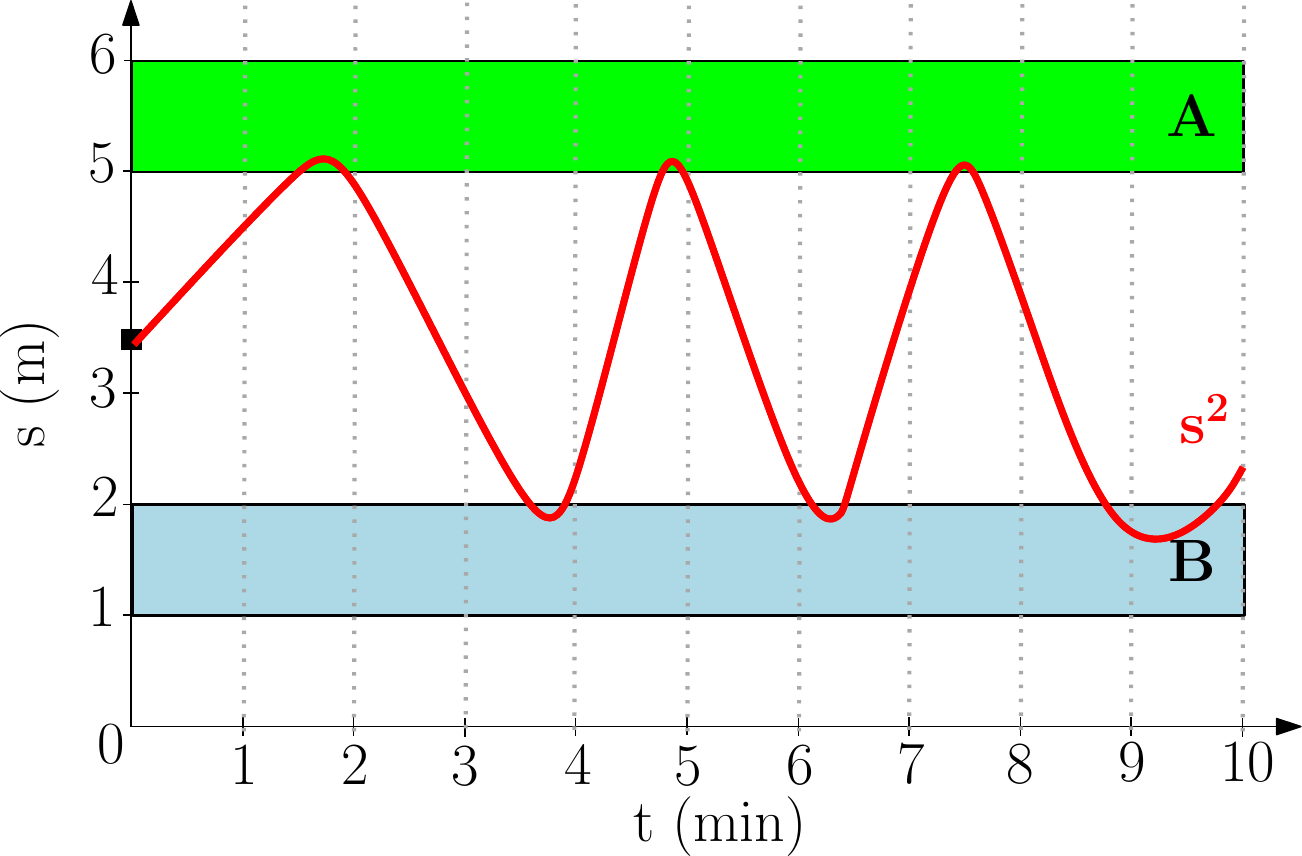}
 \caption{The specification is ``visit regions $A$ and $B$ every 3 minutes along a mission horizon of
10 minutes", i.e., ${\Phi = G_{[0,7]} (F_{[0,3]}  (s>5 \wedge s<6)\wedge F_{[0,3]} (s>1 \wedge s<2))}$, which is satisfied by signal $\mathbf{s^1}$ and violated by signal $\mathbf{s^2}$.}

\label{fig:ex1}
 \end{center}
 \end{figure}
\end{example}

\subsection{Markov Decision Process}
A Markov Decision Process (MDP) is used to model discrete-time, finite state and action space stochastic dynamical systems as ${M=\tuple{\Sigma,A,P,R}}$, where $\Sigma$ denotes the state-space, $A$ denotes the action-space, ${P: \Sigma \times A \times \Sigma \rightarrow [0,1]}$ is the probabilistic transition relation, and $R: \Sigma \rightarrow \mathbb{R}$ is the reward function. If $\sigma \in \Sigma$ is the current state, then the next state $\sigma^\prime \in \Sigma$ on taking an action $a \in A$ is determined using the probabilistic transition function $P$. Given an initial state $\sigma_0$ and action sequence $a_{0:T-1}$, we denote the state sequence generated by $M$ for $T$ time steps as $\sigma_{0:T}$. Every state $\sigma \in \Sigma$ has a scalar measure of value given by the reward function $R$. \\
  
\subsection{Reinforcement Learning: $Q$-learning}
For systems with unknown stochastic dynamics, reinforcement learning can be used to design optimal control policies, i.e., the system learns how to take actions by trial and error interactions with the environment [\cite{sutton1998}].  

$Q$-learning is an off-policy model-free reinforcement learning method [\cite{watkins1992}], which can be used to find the optimal policy for a finite MDP. In particular, the objective of an agent at state $\sigma_{t}$ is to maximize $V(\sigma_{t})$, its expected (discounted) cumulative reward in finite or infinite horizon, i.e., 
\begin{equation}
E\Big[\sum_{k=0}^T R(\sigma_{k+t+1})\Big] \quad \text{ or } \quad E\Big[\sum_{k=0}^\infty \gamma^k R(\sigma_{k+t+1})\Big], 
\label{eq:Qobj}
\end{equation}
where $R(\sigma)$ is the reward obtained at state $\sigma$, and $\gamma$ is the discount factor. Also, ${V^*(\sigma) = \max_a Q^*(\sigma,a)}$, where $Q^*(\sigma,a)$ is the optimal $Q$-function for every state-action pair $(\sigma,a)$. 

Starting from state $\sigma$, the system chooses an action $a$, which takes it to state $\sigma^\prime$ and results in a reward $R(\sigma)$. Then, the $Q$-learning rule is defined as follows:
\begin{equation}
\begin{array}{r}
Q(\sigma,a) := (1-\alpha) Q(\sigma,a) + \alpha [R(\sigma)+\gamma \max\limits_{a^* \in A} Q (\sigma^\prime,a^*)],
\end{array}
\label{eq:Qupdate}
\end{equation}
where $\gamma \in (0,1)$ is the discount factor and $\alpha \in (0,1]$ is the learning rate. Accordingly, if each action $a \in A$ is repetitively implemented in each state $\sigma \in \Sigma$ for infinite number of times and $\alpha$ decays appropriately, then $Q$ converges to $Q^*$ with probability $1$ [\cite{tsitsiklis1994}]. Thus, we can find the optimal policy $\pi^*: \Sigma \rightarrow A$ as $\pi^* = \arg\max_a Q^*(\sigma,a)$.

\section{Problem Statement}

In this paper, we assume that the dynamical system is abstracted as an MDP ${M=\tuple{\Sigma,A,P,R}}$, where $\Sigma$ denotes the set of partitions over a continuous space, $A$ is the set of motion primitives, and each motion primitive $a \in A$ drives the system from the centroid of a partition to the centroid of an adjacent partition. In real-world applications, many systems (e.g., robotic platforms) have uncertainty in their dynamics that are difficult to model (e.g., uncertainty in actuation, gusts in the environment, noises in sensors). In this aspect, we assume that the transition probability function $P$ is unknown in MDP $M$. In other words, given a state-action pair $\sigma_t, a$, the probability distribution of state at next time interval $\sigma_{t+1}$ is unknown. Accordingly, a learning problem can be defined as follows:

\textbf{Problem 1} [Maximizing Probability of Satisfaction] 
\textit{Given an STL specification $\Phi = F_{[0,T]}\phi$ or $G_{[0,T]}\phi$ with a horizon $hrz(\Phi)=T$, a stochastic system model $M=\tuple{\Sigma,A,P,R}$ with unknown $P$ and an initial partial state trajectory $\sigma_{0:\tau}$ for $\tau = \Big\lceil \frac{hrz(\phi)}{\Delta t} \Big\rceil+1$, find a control policy} 
\begin{equation}
\pi^*_1= \arg\max\limits_{\pi} Pr^{\pi} [\sigma_{0:T} \models \Phi] = 
\begin{cases}
\arg\max\limits_{\pi} E^{\pi} \Big[ \max\limits_{t \in [\tau-1,T]} \sigma_{t-\tau+1:t} \models \phi \Big], & \text{if } \Phi=F_{[0,T]} \phi \\
\arg\max\limits_{\pi} E^{\pi} \Big[ \min\limits_{t \in [\tau-1,T]} \sigma_{t-\tau+1:t} \models \phi \Big], & \text{if } \Phi=G_{[0,T]} \phi
\end{cases}
\label{eq:pr1}
\end{equation}
\textit{where $Pr^{\pi} [\sigma_{0:T} \models \Phi]$ is the probability of $\sigma_{0:T}$ satisfying $\Phi$ under policy $\pi$, and $\sigma_{t-\tau+1:t} \models \phi$ is the satisfaction of $\sigma_{t-\tau+1:t}$ with respect to $\phi$.}

 It is shown in \cite{aksaray2016q} that the objective function in \eqref{eq:pr1} is not in the standard form of Q-learning. Hence, they have proposed an approximation of Problem~$1$.

\textbf{Problem 2} [Maximizing Approximate Probability of Satisfaction, \cite{aksaray2016q}]
\textit{Given an STL specification $\Phi = F_{[0,T]}\phi$ or $G_{[0,T]}\phi$ with a horizon $hrz(\Phi)=T$, a stochastic system model $M=\tuple{\Sigma,A,P,R}$ with unknown $P$, known reward function $R$, a log-sum-exp approximation constant $\beta>0$ and an initial partial state trajectory $\sigma_{0:\tau}$ for $\tau = \Big\lceil \frac{hrz(\phi)}{\Delta t} \Big\rceil+1$, find a control policy}
\begin{equation}\label{pr:1a}
\pi^*_{2}= 
\begin{cases}
\arg\max\limits_{\pi} E^{\pi} \Big[ \sum\limits_{t=\tau-1}^T e^{\beta(\sigma_{t-\tau+1:t} \models \phi)} \Big], & \text{if } \Phi=F_{[0,T]} \phi \\
\arg\max\limits_{\pi} E^{\pi} \Big[ - \sum\limits_{t=\tau-1}^T e^{- \beta(\sigma_{t-\tau+1:t} \models \phi)} \Big], & \text{if } \Phi=G_{[0,T]} \phi
\end{cases}
\end{equation}

A new system model is defined using a $\tau$-MDP ${M^\tau=\tuple{\Sigma^\tau,A,P^\tau,R^\tau}}$, with states $\sigma^\tau_t = \sigma_{t-\tau+1:t}$ and solved using $Q$-learning in \cite{aksaray2016q}. This $\tau$-MDP representation suffers from exponential state-space growth with horizon $\tau$ (Curse of history). Hence, we intend to solve the approximate objective \eqref{pr:1a}, in a new system representation without the curse of history.

\section{Proposed Technique}
The proposed technique is based on the definition of a new compact system representation, which is rich enough to capture the necessary history within a finite window horizon. The new representation is sufficient to compute reward and action at each time step.

Let $\Phi$ be $G_{[0,T]} \phi$ or $F_{[0,T]} \phi$, where $\Phi$ is the STL formula with the syntax as given in \eqref{STLsyntax}. Suppose $\Phi$ has $n$ STL sub-formulae $\phi_i$ as $G_{[0,t]} \varphi$ or $F_{[0,t]} \varphi$ with horizon $hrz(\phi_i)=\tau_i, \forall i\in [1,n]$. Then, we assign one discrete valued flag variable ($f_i \in \mathfrak{F_i} := \{k/(\tau_i-1), k \in [0,\tau_1-1]\}$) for each $\phi_i$ and proceed with defining a flag state augmented MDP know as F-MDP, which captures current state and flags to test for satisfaction of each STL sub-formula $\phi_i$. $\mathfrak{F_i}$ is the flag state set of the flag $f_i,i\in [1,n]$ 
 
\begin{definition} [$F$-MDP]
Given MDP $M=(\Sigma,A,P,R)$ and flag state sets $\mathfrak{F_i}, \forall i \in [1,n]$, an $F$-MDP is a tuple ${M^{F}=  
(\Sigma^{F},A,P^{F},R^{F})}$, where 

\begin{itemize}
\item $\Sigma^{F} \subseteq (\Sigma \times{\displaystyle \prod_{i=1}^{n} \mathfrak{F_{i}}}$) is the set of finite states, obtained by the cartesian product between state set and all n flag state sets.
Each state $\sigma^F \in \Sigma^F$ holds the current $\sigma \in \Sigma$ and $f_i \in \mathfrak{F_i}, \forall i \in [1,n]$.
\vskip1ex
\item $P^F: \Sigma^F \times A \times \Sigma^F 
\rightarrow [0,1]$ is a probabilistic transition relation. Let $\sigma^F = \sigma,f_1,f_2,..,f_n$ and $\sigma^{F^\prime} =  \sigma^\prime,f^\prime_1,f^\prime_2,..,f^\prime_n$. $P^F (\sigma^{F}, a ,\sigma^{F^\prime})>0 $ if and only if $P(\sigma, a, \sigma^\prime)>0$ and $f^\prime_i = update(f_i,\sigma), \forall i \in [1,n]$. where $update(.)$ is the flag update rule:

\begin{equation}
  f^\prime_i =
  \begin{cases}
    1, & \text{if $\phi_i = F_{[0,t]}\varphi_i$ and  $\sigma^\prime \vDash \varphi_i $} \\
    min(f_i - 1/(\tau_i-1),0), & \text{if $\phi_i = F_{[0,t]}\varphi_i$ and  $\sigma^\prime \nvDash \varphi_i $} \\
    max(f_i + 1/(\tau_i-1),1), & \text{if $\phi_i = G_{[0,t]}\varphi_i$ and  $\sigma^\prime \vDash \varphi_i $} \\
    0, & \text{if $\phi_i = G_{[0,t]}\varphi_i$ and  $\sigma^\prime \nvDash \varphi_i $} \\
  \end{cases} \ \ \ \ \ \ \forall i \in [1,n].
  \label{rule:flagupdate}
\end{equation}

\vskip1ex
\item $R^F: \Sigma^F \rightarrow \mathbb{R}$ is a reward function. 
\end{itemize}
\label{def:E-MDP}
\end{definition}

From the update rule \eqref{rule:flagupdate}, we can see how each flag $f_i$ can only take on discrete values between $0$ and $1$ with a step size of $1/(\tau_i-1)$. Number of states the flag $f_i$ can take is $\tau_i$ and thus $\mathfrak{F_i}$, has a size equal to horizon $\tau_i$ of the STL sub-formula $\phi_i, \forall i \in [1,n]$.

Given a $\sigma^F = \sigma,f_1,f_2..f_n$, the satisfaction function $sat(\sigma^F,\phi)$ used to test for satisfaction of STL formula $\phi$ and its constituent STL sub-formulae $\phi_i$, is recursively defined as follows:

\begin{equation}
  sat(\sigma^F,\phi_i)  =
  \begin{cases}
    1, & \text{if $ f_i > 0 \ $or$ \ \sigma \vDash \varphi_i$ for $\phi_i = F_{[0,t]}\varphi_i$} \\
    0, & \text{if $ f_i = 0 \ $and$ \ \sigma \nvDash \varphi_i$ for $\phi_i = F_{[0,t]}\varphi_i$}\\
    1, & \text{if $ f_i = 1 \ $and$ \ \sigma \vDash \varphi_i$ for $\phi_i = G_{[0,t]}\varphi_i$}\\
    0, & \text{if $ f_i < 1 \ $or$ \ \sigma \nvDash \varphi_i$ for $\phi_i = G_{[0,t]}\varphi_i$} \\
  \end{cases} \ \ \ \ \ \ \forall i \in [1,n].
  \label{rule:flagupdate}
\end{equation}

\begin{align}\label{eq:sat}
  & sat(\sigma^F,\phi_j \wedge \phi_k) = min(sat(\sigma^F,\phi_j),sat(\sigma^F,\phi_k)), \\
  & sat(\sigma^F,\phi_j \vee \phi_k) = max(sat(\sigma^F,\phi_j),sat(\sigma^F,\phi_k)),
\end{align}

where $\phi_j,\phi_k$ can be STL sub-formulae or their conjunction using logical operators $\wedge,\vee$. For example, if $\phi = ((\phi_1 \wedge \phi_2) \vee \phi_3)$, first current state $\sigma^F$ is evaluated with respect to sub-formulae $\phi_i, \forall i \in [1,3]$. Then $\sigma^F$ is evaluated with respect to $\phi_j \wedge \phi_k, j = 1,k = 2$. Finally $\sigma^F$ is evaluated with respect to new $\phi_j \vee \phi_k$, where $\phi_j = \phi_1 \wedge \phi_2$ and $k=3$.  

The reward function $R^F$ of the problem in the new MDP $M^F$ is as follows:

\begin{equation}
  r =
  \begin{cases}
    e^{\beta sat(\sigma^F,\phi)}, & \text{if $\Phi = F_{[0,T]}\phi $} \\
    -e^{-\beta sat(\sigma^F,\phi)}, & \text{if $\Phi = G_{[0,T]}\phi $} \\
  \end{cases}
\end{equation}
where $\beta>0$ is the log-sum-exp approximation constant. 

The overview of the complete technique to solve \eqref{pr:1a} using F-MDP is as follows:\\
 1) For any STL formula $\Phi$ in accordance with \eqref{STLsyntax} (i.e., $G_{[0,T]} \phi$ or  $F_{[0,T]} \phi$), create one flag per STL sub-formula $\phi_i$ in $\Phi$ and redefine the learning problem in a new flag state augmented state-space $\Sigma^F$ which has new state dimensions corresponding to the flags.\\
 2) Define the objective function such that the agent observes an immediate reward as a function of current state in $\Sigma^F$. After executing these steps, one can use standard $Q$-learning algorithm to find the optimal policy $\pi^*: \Sigma^F \rightarrow A$ in the new $F$-MDP state-space. Overall, we aim to solve the following problem.
 
\textbf{Problem 3} [Maximizing Approximate Probability of Satisfaction with $F$-MDP]
\textit{Let $\Phi$ be STL formula with the syntax in \eqref{STLsyntax}, made up of STL sub-formulae $\phi_i,\forall i \in [1,n]$. Let $T=hrz(\Phi)$, $\tau_i= \Big\lceil \frac{hrz(\phi_i)}{\Delta t} \Big\rceil+1,\forall i \in [1,n]$ and $\tau = \max\limits_{i \in [1,n]}(\tau_i)$. Given an unknown MDP $M$, and flag state sets $\mathfrak{F_i},\forall i \in [1,n]$, $F$-MDP ${M^F=\tuple{\Sigma^F,A,P^F,R^F}}$ can be constructed. Assume that initial $\tau$-states $\sigma_{0:\tau-1}$ are given from which $\sigma^F_{\tau-1}$ can be obtained. Let $\beta>0$ be a known approximation parameter. Find a control policy $\pi^*_{3}: \Sigma^F \rightarrow A$ such that }
\begin{equation}\label{pr:1a}
\pi^*_{3}= 
\begin{cases}
\arg\max\limits_{\pi} E^{\pi} \Big[ \sum\limits_{t=\tau-1}^T e^{\beta sat(\sigma^F_t,\phi)} \Big], & \text{if } \Phi=F_{[0,T]} \phi \\
\arg\max\limits_{\pi} E^{\pi} \Big[ - \sum\limits_{t=\tau-1}^T e^{-\beta sat(\sigma^F_t,\phi)} \Big], & \text{if } \Phi=G_{[0,T]} \phi
\end{cases}
\end{equation}
\textit{where $sat(\sigma^F,\phi)$ is the satisfaction function as defined in \eqref{eq:sat}.}

\subsection{Theoretical Results}

The optimal policy $\pi^*_1$ of Problem~$1$, can be related to $\pi^*_{3}$ of Problem~$3$ by the following theorem.
\begin{theorem}
Let $\Phi$ and $\phi$ be STL formula with the syntax in \eqref{STLsyntax} such that $\Phi=F_{[0,.]}\phi$ or ${\Phi=G_{[0,.]}\phi}$. Let $hrz(\Phi)=T$ .  Assume that a partial state trajectory $s_{0:\tau-1}$ is initially given where $\tau= \Big\lceil \frac{hrz(\phi)}{\Delta t} \Big\rceil+1$. For some $\beta>0$ and $\Delta t = 1$\footnote{$\Delta t =1$ is selected due to clarity in presentation, but it can be any time step.}, let $\pi^*_1$ and $\pi_{3}^*$ be the optimal policies obtained by solving Problems 1 and 3 respectively. Then,
\begin{eqnarray}\label{eq:dist}
Pr^{\pi^*_1}[s_{0:T} \models \Phi] - \frac{1}{\beta}\log (T - \tau + 2) \ \ \leq \ \ Pr^{\pi_{3}^*}[s_{0:T} \models \Phi] \ \ \leq \ \ Pr^{\pi^*_1}[s_{0:T} \models \Phi].
\end{eqnarray}
\label{theorem}
\end{theorem}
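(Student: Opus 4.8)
The plan is to treat the two inequalities separately. The upper bound $Pr^{\pi_3^*}[s_{0:T}\models\Phi] \le Pr^{\pi_1^*}[s_{0:T}\models\Phi]$ is essentially immediate: for the $F$ case the Problem~1 objective is exactly $Pr^\pi[s_{0:T}\models\Phi] = E^\pi[\max_t(\sigma_{t-\tau+1:t}\models\phi)]$, so $\pi_1^*$ maximizes the satisfaction probability over \emph{all} policies and in particular dominates $\pi_3^*$; the $G$ case is symmetric with $\max$ replaced by $\min$. All the real work is in the lower bound, which must quantify how much the log-sum-exp surrogate of Problem~3 can lose relative to the true optimum.

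The first key step is a correctness lemma for the flags: for every realized trajectory and every $t\in[\tau-1,T]$ one has $sat(\sigma_t^F,\phi) = (\sigma_{t-\tau+1:t}\models\phi)$, i.e. the flag-augmented satisfaction function reproduces the exact Boolean satisfaction of $\phi$ over the length-$\tau$ window. I would prove this by induction on the structure of $\phi$ in \eqref{STLsyntax}: first show that a scalar flag $f_i$ evolving under \eqref{rule:flagupdate} stays strictly positive for exactly $\tau_i-1$ steps after the last time $\varphi_i$ held (respectively stays equal to $1$ only while $\varphi_i$ has held throughout), so that the base cases of \eqref{rule:flagupdate} realize $F_{[0,\tau_i-1]}\varphi_i$ and $G_{[0,\tau_i-1]}\varphi_i$ on the window; then lift to conjunctions and disjunctions via \eqref{eq:sat} and the $\min/\max$ Boolean semantics. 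The payoff is that, writing $b_t := (\sigma_{t-\tau+1:t}\models\phi)\in\{0,1\}$, the Problem~3 reward at step $t$ equals $e^{\beta b_t}$ (respectively $-e^{-\beta b_t}$), so $\pi_3^*$ maximizes $E^\pi\big[\sum_{t=\tau-1}^T e^{\beta b_t}\big]$ and Problem~3 coincides with Problem~2.

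The second step is the log-sum-exp sandwich. Since each $b_t\in\{0,1\}$ and there are $N:=T-\tau+2$ summands, the standard estimate gives, pointwise on each trajectory, $\max_t b_t \le \tfrac1\beta\log\sum_t e^{\beta b_t}\le \max_t b_t + \tfrac1\beta\log N$. Because $\max_t b_t = \mathbf{1}[s_{0:T}\models\Phi]$, taking expectations yields $Pr^\pi[s_{0:T}\models\Phi] \le L(\pi)\le Pr^\pi[s_{0:T}\models\Phi] + \tfrac1\beta\log N$, where $L(\pi):=E^\pi[\tfrac1\beta\log\sum_t e^{\beta b_t}]$. The desired chain then reads $Pr^{\pi_3^*}[\cdot]\ge L(\pi_3^*)-\tfrac1\beta\log N \ge L(\pi_1^*)-\tfrac1\beta\log N \ge Pr^{\pi_1^*}[\cdot]-\tfrac1\beta\log N$, where the outer inequalities are the two sides of the sandwich and the middle one would follow from optimality of $\pi_3^*$ for $L$. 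The $G$ case uses the symmetric bound for $-\tfrac1\beta\log\sum_t e^{-\beta b_t}$.

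I expect the middle inequality $L(\pi_3^*)\ge L(\pi_1^*)$ to be the main obstacle, because $\pi_3^*$ is defined to maximize $E^\pi[\sum_t e^{\beta b_t}]$, the expectation of the exponential sum, whereas $L$ is the expectation of its $\tfrac1\beta\log$; expectation and $\log$ do not commute, so optimality for one does not transfer to the other. Monotonicity of $\tfrac1\beta\log$ only yields that $\pi_3^*$ also maximizes $\tfrac1\beta\log E^\pi[\sum_t e^{\beta b_t}]$ with the $\log$ \emph{outside} the expectation, and moving it inside incurs a Jensen gap. I would try to close this either by exploiting that $b_t$ is Boolean, so $\sum_t e^{\beta b_t}$ takes only the finitely many values $ke^\beta+(N-k)$ and $E^\pi[e^{\beta\max_t b_t}] = 1+(e^\beta-1)Pr^\pi[\cdot]$ couples the surrogate tightly to the satisfaction probability, or by taking the log-inside objective $L$ as the quantity $\pi_3^*$ is meant to optimize so that the middle inequality holds by construction; pinning down this identification is the delicate part, while the flag lemma and the log-sum-exp estimate are routine.
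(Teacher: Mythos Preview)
Your overall architecture is exactly the paper's: apply the log-sum-exp sandwich \eqref{eq:softmaxgap} pointwise to the window satisfactions, take expectations, and combine with the optimality of $\pi_1^*$ for Problem~1 and of $\pi_3^*$ for Problem~3. On two points you are in fact more careful than the paper.

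First, the flag-correctness lemma $sat(\sigma_t^F,\phi)=(\sigma_{t-\tau+1:t}\models\phi)$ is used tacitly in the paper (the displayed inequalities jump directly from $\sigma_{t-\tau+1:t}\models\phi$ to $sat(\sigma_t^F,\phi)$) but is never stated or argued; your inductive proof over the syntax of $\phi$ fills a genuine omission.

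Second, the obstacle you isolate at the ``middle inequality'' $L(\pi_3^*)\ge L(\pi_1^*)$ is real, and the paper does not actually overcome it. The paper's justification is that ``since $1/\beta$ is a constant and $\log(\cdot)$ a strictly monotonous function'' one may drop $\tfrac{1}{\beta}\log$ from the surrogate and still recover $\pi_3^*$ as the maximizer; it then combines the sandwich with $E^{\pi_1^*}\big[\sum_t e^{\beta\,sat}\big]\le E^{\pi_3^*}\big[\sum_t e^{\beta\,sat}\big]$. But, as you observe, monotonicity of $\log$ only lets you move it \emph{outside} the expectation, so this identifies $\arg\max_\pi E^\pi\big[\sum_t e^{\beta b_t}\big]$ with $\arg\max_\pi \tfrac{1}{\beta}\log E^\pi\big[\sum_t e^{\beta b_t}\big]$, not with the log-inside quantity $L(\pi)=E^\pi\big[\tfrac{1}{\beta}\log\sum_t e^{\beta b_t}\big]$ that the sandwich controls. (The paper's displayed inequality with $E^\pi[\sum_t e^{\beta\,sat}]$ sandwiched between quantities of order $1$ is already dimensionally off, which is a symptom of the same conflation.) So your diagnosis that this step is the delicate one is correct, and the paper offers no argument beyond what you already have; exploiting that $b_t\in\{0,1\}$, as you suggest, is the natural place to look for a repair.
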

\begin{proof}
Proof 1 in Appendix
\end{proof}

For a given MDP $M$ and STL formula $\Phi$\eqref{STLsyntax}, the $F$-MDP state-space $\Sigma^F$ over which the $Q$-learning is solved has $|\Sigma^F| = |\Sigma|\times \Pi^n_1|\mathfrak{F_i}|$ number of states. $Q$-table used for $Q$-learning on $F$-MDP has $|\Sigma^F|\times |A|$ number of entries. The $\tau$-MDP [\cite{aksaray2016q}] on the other hand has $|\Sigma|^\tau$ elements in $\Sigma^\tau$ and $|\Sigma^\tau| \times |A|$ entries in its $Q$-table. For most real world problems, it is safe to assume that both the number of states ($|\Sigma|$) and horizon ($hrz(\phi) = \tau$) is more than STL sub-formulae horizons ($hrz(\phi_i) = \tau_i, \forall i \in [1,n]$) and the number of STL sub-formulae ($n$) in $\phi$ respectively. The above reasoning shows how the proposed technique has only polynomial growth of state-space with horizon $\tau$. Moreover, lesser entries in $Q$-table also results in faster learning. $Q$-learning on a smaller, compact Q-table is faster to converge to the optimal solution due to the need for the Q-learning algorithm to visit every state, action pair infinitely often, as a condition for convergence. Thus for problems with large horizon $\tau$, the proposed technique convergence to the optimal policy faster than that solved with $\tau$-MDP.

\section{Simulation Results}
\begin{wrapfigure}{r}{0.43\textwidth}
\centering
\includegraphics[width=0.35\textwidth, trim=0cm 0cm 0cm 2cm]{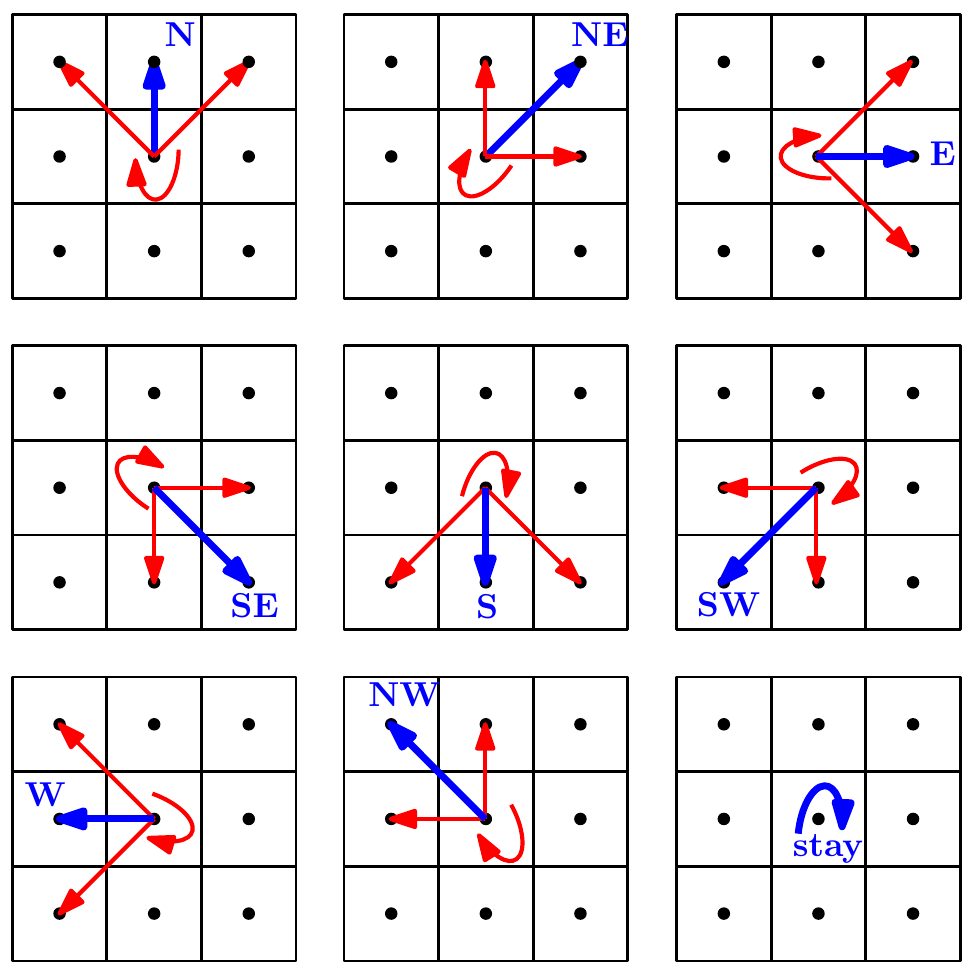}
 \caption{Motion uncertainty (red arrows) for an action (blue arrow).}
 \label{fig:motion_CS}
\end{wrapfigure} 

\label{caseStudy}
In the following case studies, we consider a single agent moving in a discretized environment. The set of motion primitives at each state is $A=\{N,NW,W,SW,S,SE,E,NE,stay\}$. We model the motion uncertainty as in Fig.~\ref{fig:motion_CS} where, for any selected feasible action in $A$, the agent follows the corresponding blue arrow with probability $0.93$ or a red arrow with probability $0.023$. Moreover, the resulting state after taking an infeasible action (i.e., the agent is next to a boundary and tries to move towards it) is the current state. All simulations were implemented in MATLAB and performed on a laptop with a quad core 2.4 GHz processor and 8.0 GB RAM.

\subsection{Case Study 1: Reachability}

\begin{wrapfigure}{r}{0.43\textwidth}
\centering
\includegraphics[width=0.35\textwidth, trim=0.1cm 0cm 0.1cm 1cm]{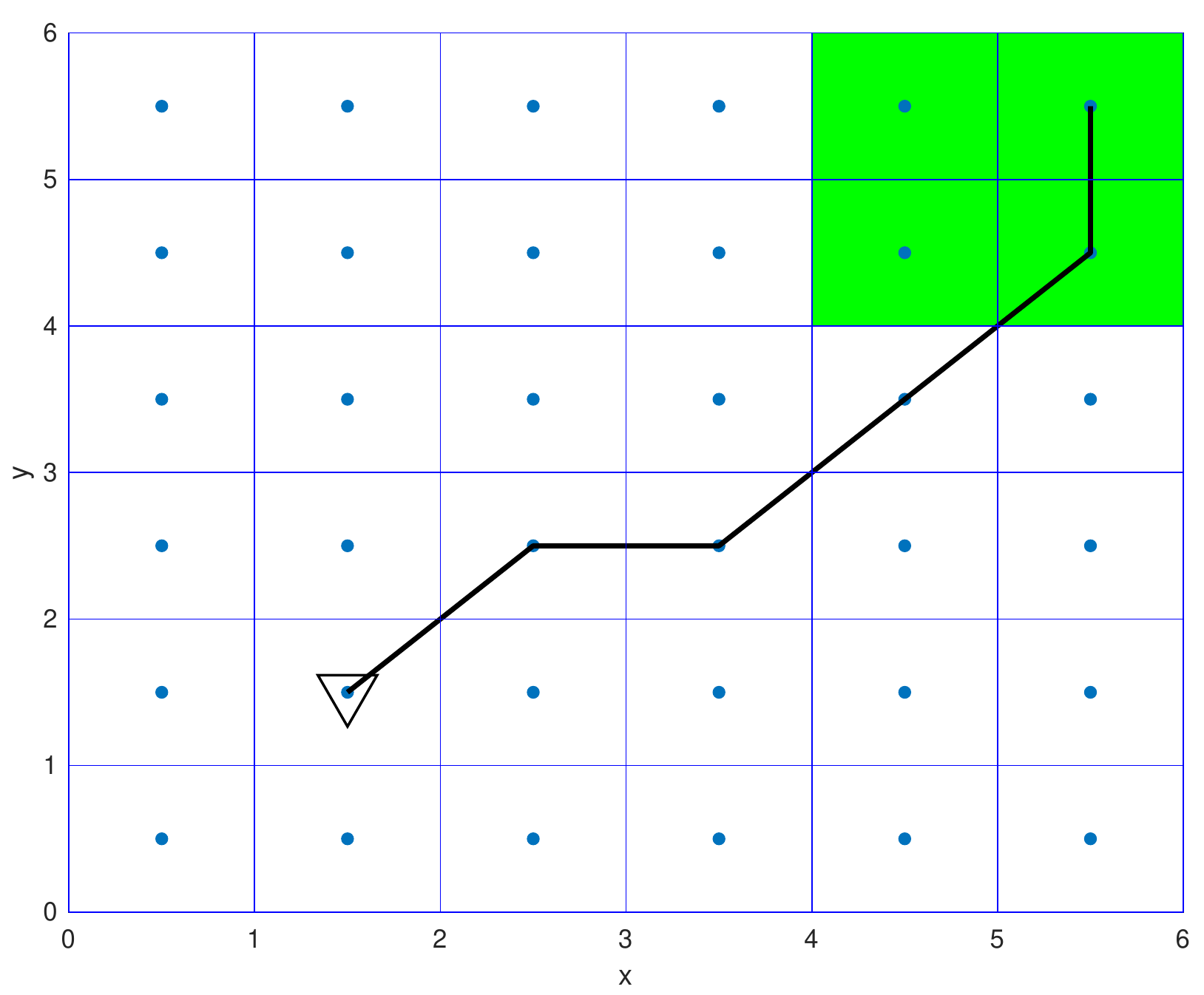}
 \caption{Sample trajectory generated by $\pi^*_{A}$ eventually reaches the desired region in green.}
 \label{fig:sampleCS1}
\end{wrapfigure}

In this case study, the initial state of the agent is $s_0=(1.5,1.5)$ as shown in Fig.~\ref{fig:sampleCS1}. We consider an STL formula defined over the environment as 
\begin{equation}
\Phi_1 = F_{[0,6]}G_{[0,1]} (x>4 \wedge y>4 ),
\label{eq:cs1}
\end{equation}
which expresses ``eventually visit the desired region within $[0,7]$. Note that $\Phi_1 = F_{[0,6]} \phi$ where ${\phi=\phi_1=G_{[0,1]}(x>4 \wedge y>4)}$ and $hrz(\phi_1)=1$. Moreover, we choose $\Delta t =1$, thus $\tau_1 =2$. The state-spaces of the system based on Fig.~\ref{fig:sampleCS1} are $|\Sigma|=36$ and $|\Sigma^F|=72$ since ${\tau_1=2}$.  

To implement the $Q$-learning algorithm, the number of episodes is chosen as $2000$ (i.e., $1\leq k \leq 2000$), and we use the parameters ${\beta=50}$, $\gamma=0.9999$, and $\alpha_k=0.95^k$. After $2000$ trainings (episodes), which took approximately $2$ minutes, the resulting policy $\pi_{3}^*$ is used to generate $500$ trajectories, which has
\[\begin{array}{cc}
{Pr^{\pi_{A}^*}[s_{0:7} \models \Phi_1]=0.996}. \\
\end{array}\]
Fig.~\ref{fig:sampleCS1} shows a sample trajectory generated by $\pi^*_3$. $\Phi_1$ is satisfied in $498/500$ sample trajectories.

\subsection{Case Study 2: Patrolling}
In the second case study, we consider an agent moving in an environment illustrated in Fig.~\ref{fig:sampleCS2} part (a). We consider an STL formula defined over the environment as 
\begin{equation}
\Phi_2 = G_{[0,12]} \big(F_{[0,h]}(region~A) \wedge F_{[0,h]}(region~B) \big),
\label{eq:cs2}
\end{equation}
where $region~A$ represents $x>1 \wedge x<2 \wedge y>3 \wedge y<4$ and $region~B$ represents $x>2 \wedge x<3 \wedge y>2 \wedge y<3$. Note that $\Phi_2$ expresses the following: ``for all $t\in[0,12]$, eventually visit $region~A$ every $[t,t+h]$ and eventually visit $region~B$ every $[t,t+h]$". Note that $\Phi_2 = G_{[0,12]} \phi$ where ${\phi=\phi_1 \wedge \phi_2}$ with $\phi_1 = F_{[0,h]}(region~A)$,  $hrz(\phi_1)=h$ and $\phi_2 = F_{[0,h]}(region~B)$, $hrz(\phi_2)=h$. $\tau=h+1$ if $\Delta t$ is $1$.   

In this case study, we chose three values for $h:=2, 4, 5$ with rest of the parameters remaining the same. The sizes of the state-spaces are $|\Sigma|=36, 36, 36$ and ${|\Sigma^F|=324, 900, 1296}$\footnote{This indicates that there are $36\times (h+1)\times (h+1)$ $F$-MDP states, $36$ system states, $(h+1)$ $f_1$ flag states and $(h+1)$ $f_2$ flag states.} for each $\tau=3, 5, 6$ respectively. To implement the $Q$-learning algorithm, the number of episodes is chosen as $10000$ (i.e., $1 \leq k \leq 10000$), with $\beta=50$, $\gamma=0.9999$, and $\alpha_k=0.95^k$. After $10000$ trainings, the resulting policies $\pi_{3}^*$ is used to generate $500$ trajectories, which leads to
\[\begin{array}{cc}
Pr^{\pi_{3}^*}[s_{0:14} \models \Phi_2]=0.6794 \ \ \ \ \  $for$ \ \ \ h=2 ,\\
Pr^{\pi_{3}^*}[s_{0:16} \models \Phi_2]=0.8237 \ \ \ \ \  $for$ \ \ \ h=4 ,\\
Pr^{\pi_{3}^*}[s_{0:17} \models \Phi_2]=0.8432 \ \ \ \ \  $for$ \ \ \ h=5 .
\end{array}\]

\begin{figure}[!htb]
\includegraphics[height=0.32\columnwidth, trim=1cm 0cm 0cm 0cm]{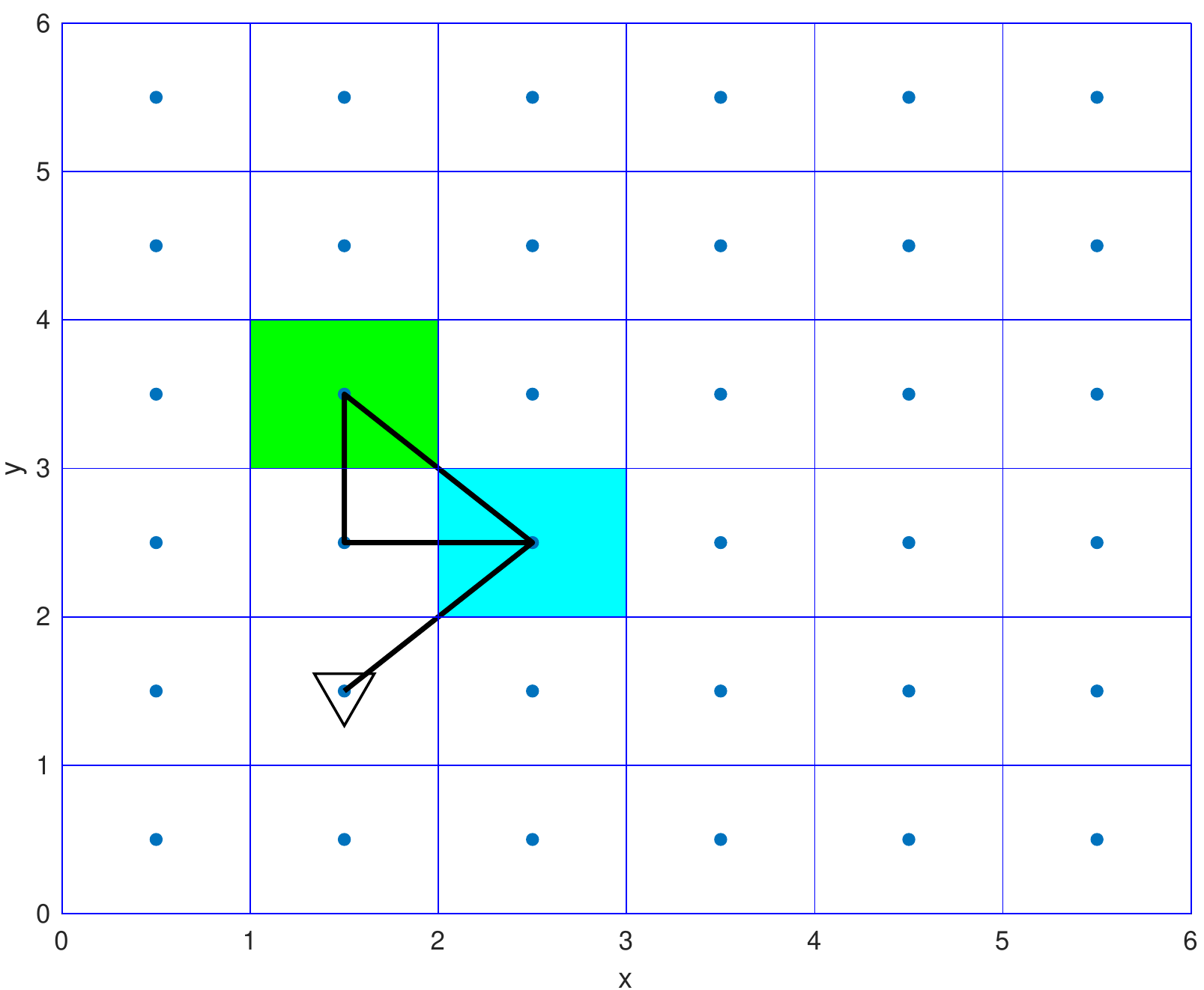}\put(-80,-10){(a)}
\includegraphics[height=0.32\columnwidth, trim=0cm 0cm 0cm 0cm]{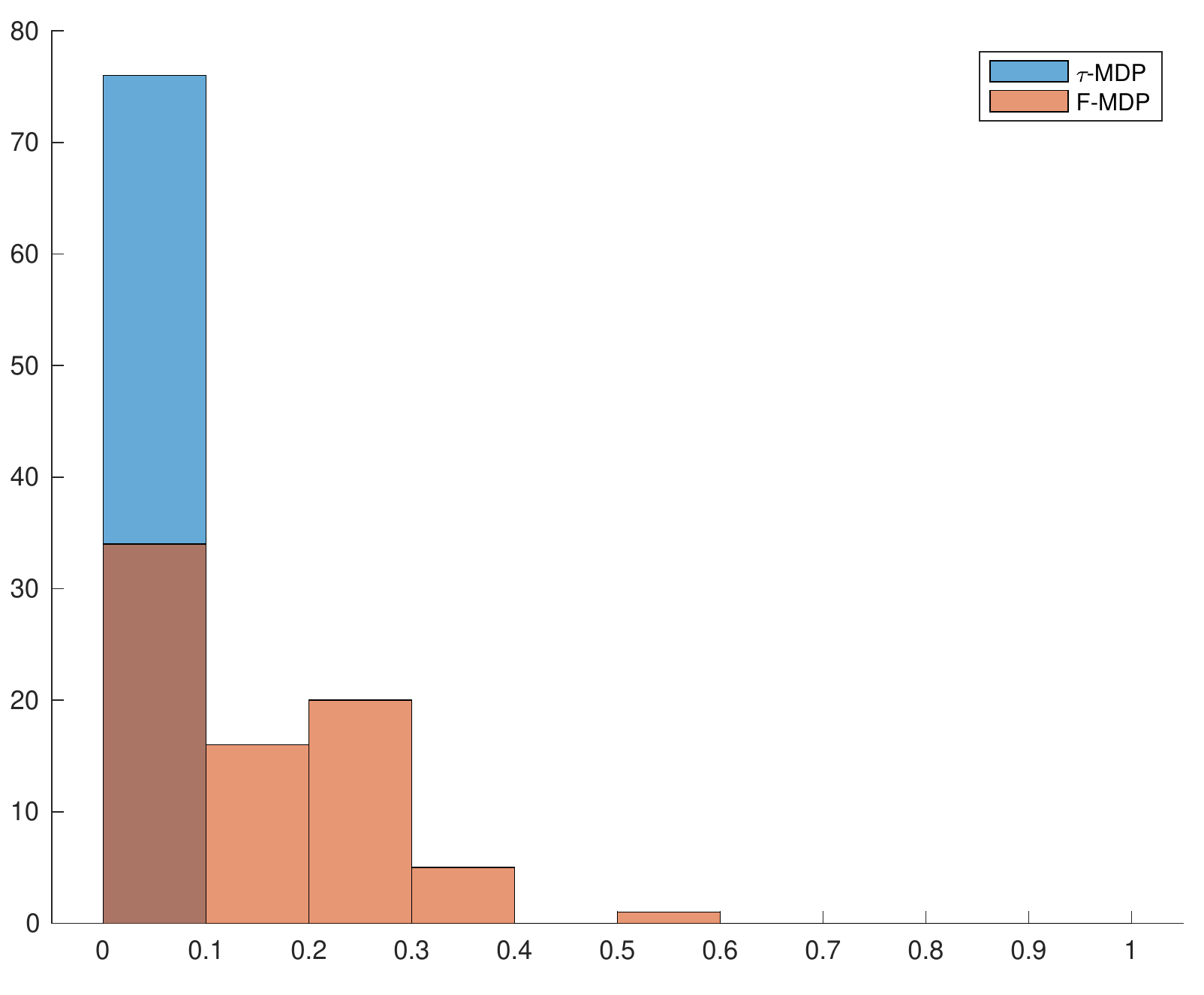}\put(-90,-10){(b)}
\centering

\includegraphics[height=0.23\columnwidth, trim=0cm 0cm 0cm 0cm]{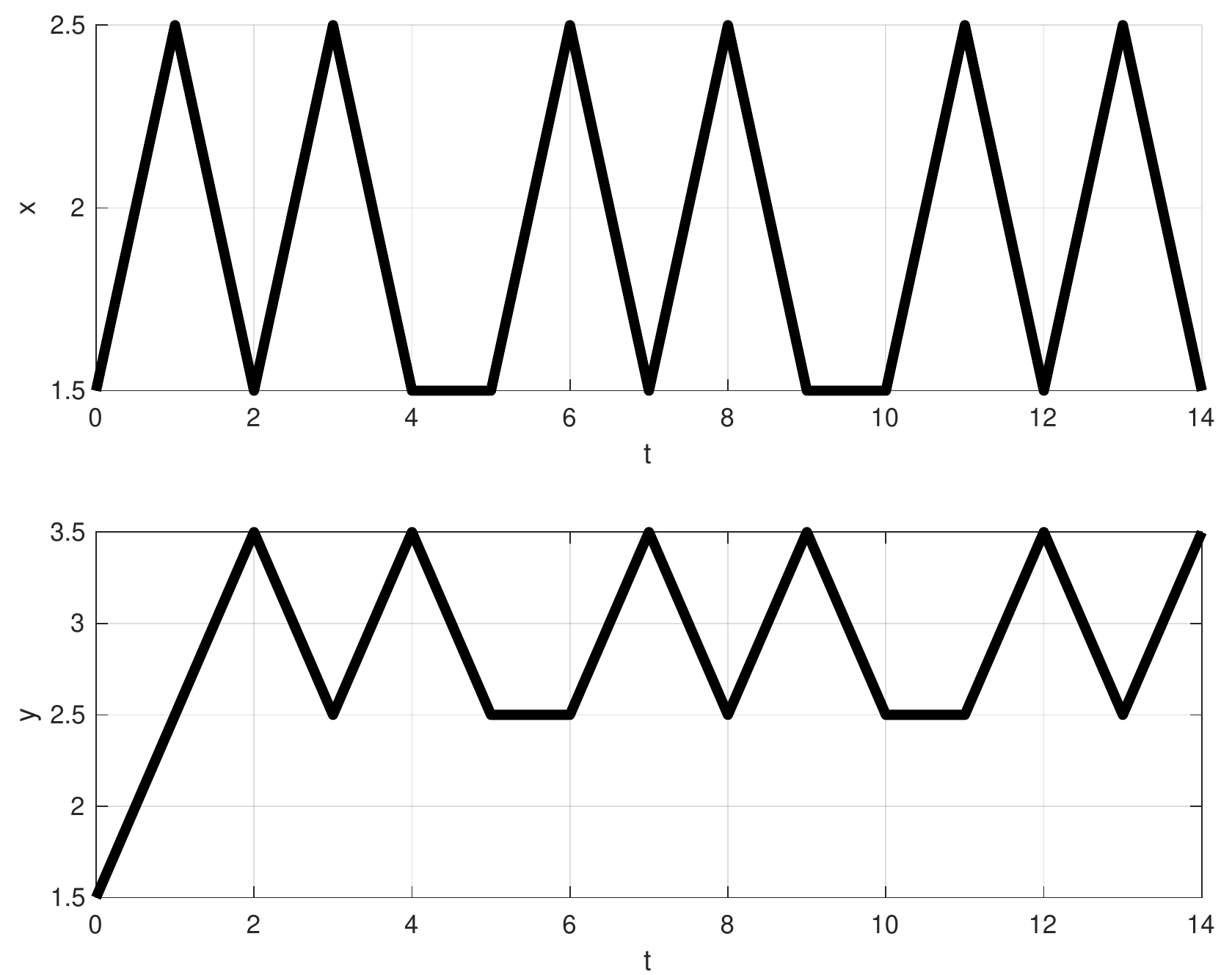}
\includegraphics[height=0.23\columnwidth, trim=0.6cm 0cm 0cm 0cm]{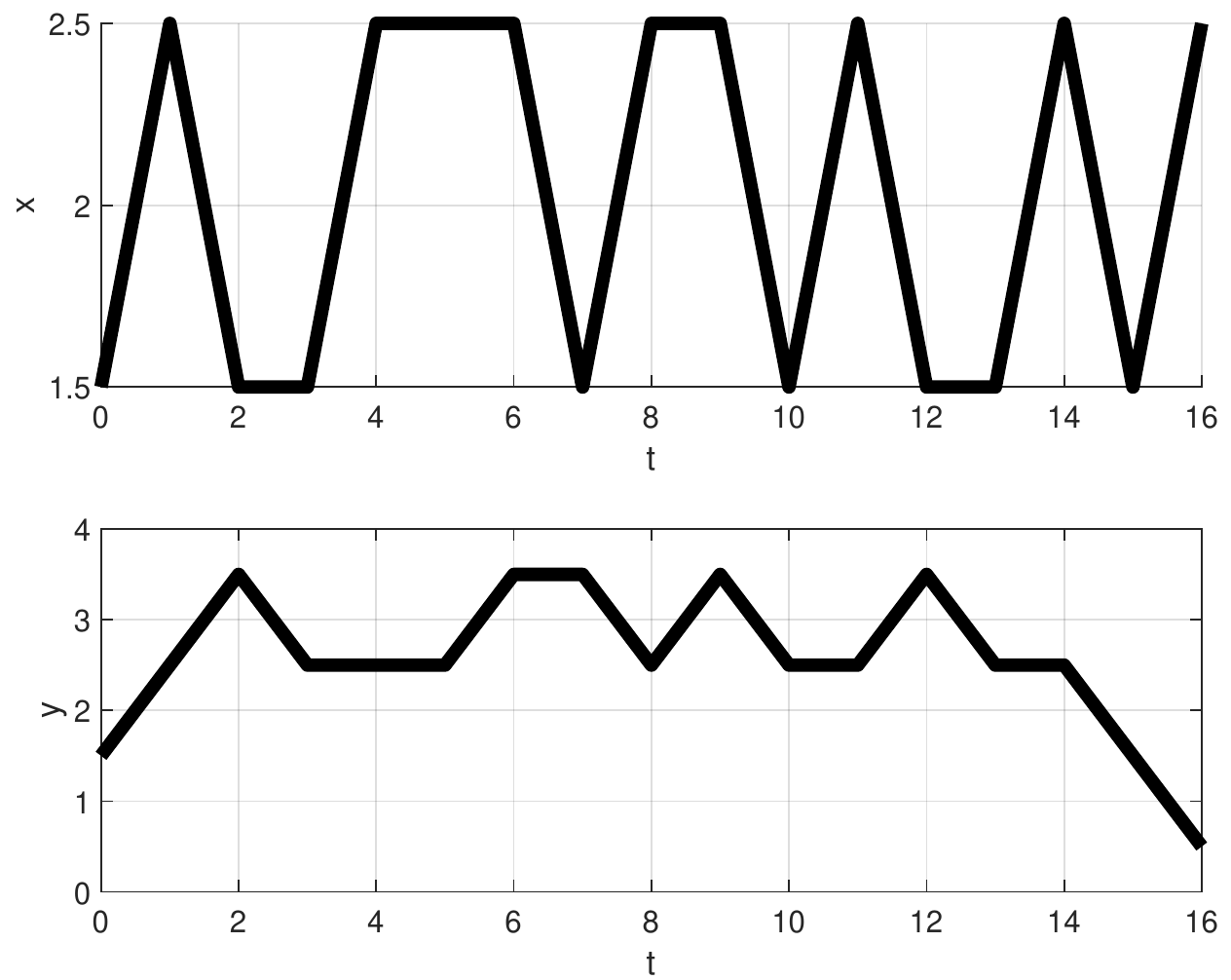}
\includegraphics[height=0.23\columnwidth, trim=0.5cm 0cm 0cm 0cm]{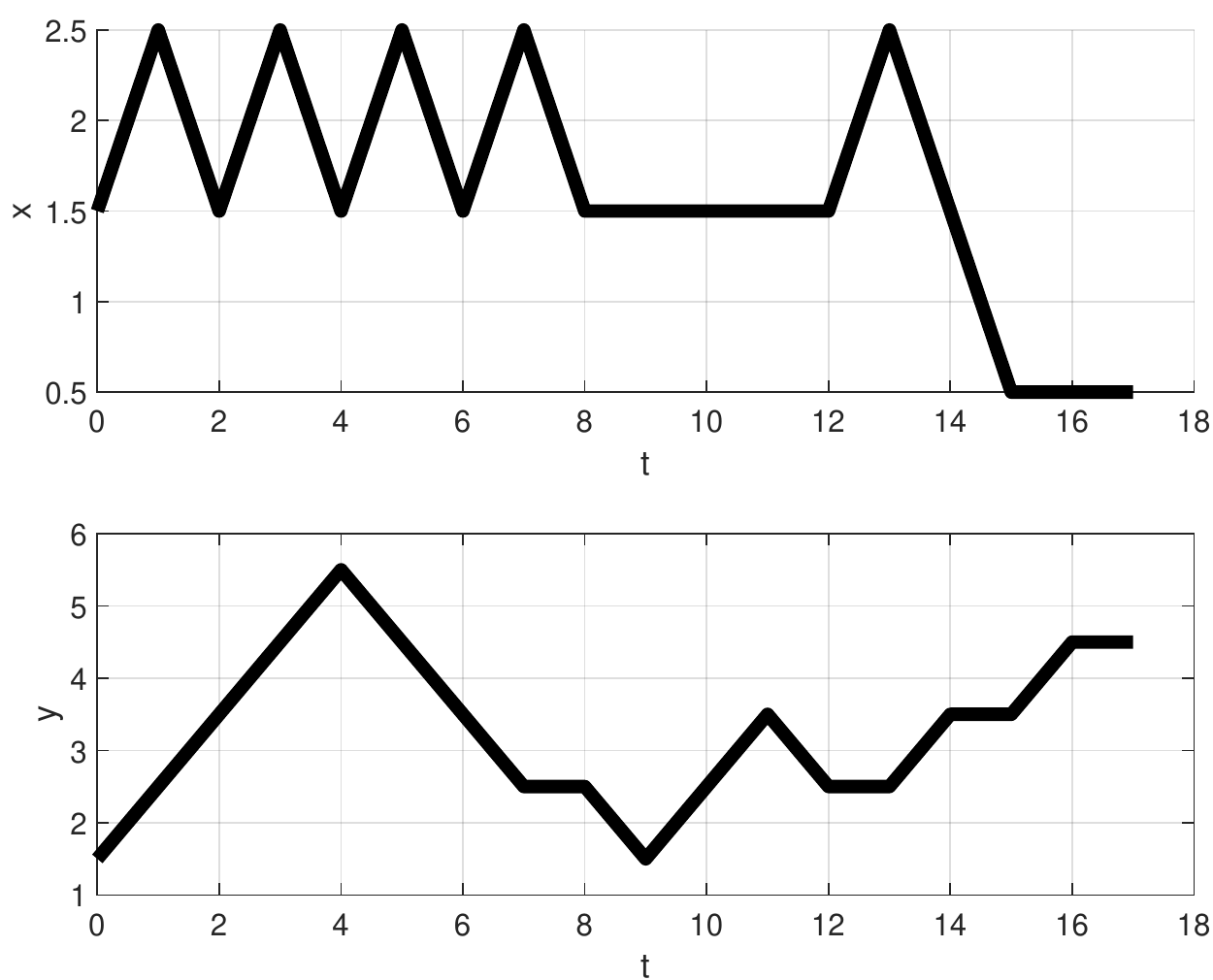}
\put(-310,-10){(c)}
\put(-189,-10){(d)}
\put(-67,-10){(e)}
 \caption{(a) Initial state and the desired regions (b) Distribution of probability of satisfaction of 75 different roll-outs with $10000$ episodes each for $\tau=3$ using $F$-MDP in orange and $\tau$-MDP in blue (c) Sample trajectory generated by $\pi^*_{A}$ for $\tau=3$ (d) $\tau=5$ (e) $\tau=6$}
 \label{fig:sampleCS2}
\end{figure}

The time and space requirements for $10000$ episode trainings on $F$-MDP and $\tau$-MDP for each of $\tau = 3,5,6$ are provided in the Table.~\ref{tab:complexity}. We can see that $F$-MDP state-space is very compact for large $\tau$. Reduced space requirement also translates into faster execution time. Fig.~\ref{fig:sampleCS2} part (b) is histogram plot of probability of satisfaction by 75 roll-outs with $10000$ episodes each for $\tau=3$ using $F$-MDP in orange and $\tau$-MDP in blue. For this problem with $\tau=3$, learning on $\tau$-MDP required 10$\times$ longer episode length to achieve similar return distribution as with the case of learning on $F$-MDP. Fig.~\ref{fig:sampleCS2} part (c), (d) and (e) shows sample trajectory generated by the optimal policy for $\tau = 3,5$ and $6$.

\begin{table}[!htb]
    \caption{Execution time (in minutes) and space requirements}
    \begin{minipage}{.5\linewidth}
      \centering
        \begin{tabular}{l|c|r|l} 
      \textbf{Technique} & \textbf{$\tau=3$} & \textbf{$\tau=5$} & \textbf{$\tau=6$}\\
      \hline
      $F$-MDP & 6 & 18 & 24\\
      $\tau$-MDP & 21 & 290 & *\textsuperscript{$\dagger$}
     \end{tabular}
     \customfootnotetext{}{$\dagger$. Matlab run time error: preferred array size exceeded}
    \end{minipage}%
    \begin{minipage}{.5\linewidth}
      \centering
        
        \begin{tabular}{l|c|r|l} 
      \textbf{Technique} & \textbf{$\tau=3$} & \textbf{$\tau=5$} & \textbf{$\tau=6$}\\
      \hline
      $F$-MDP & 2.9 $\times$ $10^3$ & 18 $\times$ $10^3$ & 24 $\times$ $10^3$ \\
      $\tau$-MDP & 1.7 $\times$ $10^4$ \textsuperscript{$\dagger$} & 1 $\times$ $10^6$ \textsuperscript{$\dagger$} & 8.1 $\times$ $10^6$ \textsuperscript{$\dagger$}
     \end{tabular}
     \customfootnotetext{}{$\ddagger$. Pruned based on feasibility of transition}
    \end{minipage} 
\label{tab:complexity}
\end{table}

\section{Conclusion}
We have proposed a model-free method to synthesize control policies that satisfy STL specifications for stochastic dynamical systems. The proposed technique remodels the system as an $F$-MDP, to capture the current system state and history. The learning objective (maximizing probability of satisfaction) is approximated and solved using $Q$-learning. We also proved that the computed optimal policy is arbitrarily close to that of the desired with a proper choice of approximation parameter. Finally, we demonstrated the tractability of the proposed technique in simulation. Future work will explore solving this problem using robustness degree metric.

\bibliography{ref}

\begin{thebibliography}{16}
\providecommand{\natexlab}[1]{#1}
\providecommand{\url}[1]{\texttt{#1}}
\expandafter\ifx\csname urlstyle\endcsname\relax
  \providecommand{\doi}[1]{doi: #1}\else
  \providecommand{\doi}{doi: \begingroup \urlstyle{rm}\Url}\fi

\bibitem[Aksaray et~al.(2015)Aksaray, Leahy, and Belta]{aksaray2015}
Derya Aksaray, Kevin Leahy, and Calin Belta.
\newblock Distributed multi-agent persistent surveillance under temporal logic
  constraints.
\newblock \emph{IFAC-PapersOnLine}, 48\penalty0 (22):\penalty0 174--179, 2015.

\bibitem[Aksaray et~al.(2016)Aksaray, Jones, Kong, Schwager, and
  Belta]{aksaray2016q}
Derya Aksaray, Austin Jones, Zhaodan Kong, Mac Schwager, and Calin Belta.
\newblock Q-learning for robust satisfaction of signal temporal logic
  specifications.
\newblock In \emph{2016 IEEE 55th Conference on Decision and Control (CDC)},
  pages 6565--6570. IEEE, 2016.

\bibitem[Brazdil et~al.(2014)Brazdil, Chatterjee, Chmelik, Forejt, Kretinsky,
  Kwiatkowska, Parker, and Ujma]{Brazdil2014}
Toma Brazdil, Krishnendu Chatterjee, Martin Chmelik, M.k, Vojtech Forejt, Jan
  Kretinsky, Marta Kwiatkowska, David Parker, and Mateusz Ujma.
\newblock Verification of markov decision processes using learning algorithms.
\newblock In Franck Cassez and Jean-Franois Raskin, editors, \emph{Automated
  Technology for Verification and Analysis}, volume 8837 of \emph{Lecture Notes
  in Computer Science}, pages 98--114. Springer International Publishing, 2014.
\newblock ISBN 978-3-319-11935-9.
\newblock \doi{10.1007/978-3-319-11936-6_8}.
\newblock URL \url{http://dx.doi.org/10.1007/978-3-319-11936-6_8}.

\bibitem[Ding et~al.(2014)Ding, Smith, Belta, and Rus]{ding2014}
Xu~Chu Ding, Stephen~L Smith, Calin Belta, and Daniela Rus.
\newblock Optimal control of markov decision processes with linear temporal
  logic constraints.
\newblock \emph{IEEE Trans. on Automatic Control}, 59\penalty0 (5):\penalty0
  1244--1257, 2014.

\bibitem[Dokhanchi et~al.(2014)Dokhanchi, Hoxha, and
  Fainekos]{dokhanchi2014line}
Adel Dokhanchi, Bardh Hoxha, and Georgios Fainekos.
\newblock On-line monitoring for temporal logic robustness.
\newblock In \emph{Runtime Verification}, pages 231--246. Springer, 2014.

\bibitem[Donz{\'e} and Maler(2010)]{donze2010}
Alexandre Donz{\'e} and Oded Maler.
\newblock \emph{Robust satisfaction of temporal logic over real-valued
  signals}.
\newblock Springer, 2010.
\newblock \doi{10.1007/978-3-642-15297-9_9}.
\newblock URL \url{http://dx.doi.org/10.1007/978-3-642-15297-9_9}.

\bibitem[Fu and Topcu(2014)]{Fu2014TLRL}
Jie Fu and Ufuk Topcu.
\newblock Probably approximately correct {MDP} learning and control with
  temporal logic constraints.
\newblock \emph{CoRR}, abs/1404.7073, 2014.
\newblock URL \url{http://arxiv.org/abs/1404.7073}.

\bibitem[Lahijanian et~al.(2015)Lahijanian, Andersson, and
  Belta]{lahijanian2015}
Morteza Lahijanian, Sean~B. Andersson, and Calin Belta.
\newblock Formal verification and synthesis for discrete-time stochastic
  systems.
\newblock \emph{IEEE Trans. on Automatic Control}, 6\penalty0 (8):\penalty0
  2031--2045, 2015.
\newblock \doi{10.1109/TAC.2015.2398883}.

\bibitem[Li and Belta(2019)]{Xiao2019_long}
Xiao Li and Calin Belta.
\newblock Temporal logic guided safe reinforcement learning using control
  barrier functions.
\newblock \emph{arXiv preprint}, 2019.

\bibitem[Li et~al.(2017)Li, Vasile, and Belta]{Xiao2017_long}
Xiao Li, Cristian-Ioan Vasile, and Calin Belta.
\newblock Reinforcement learning with temporal logic rewards.
\newblock \emph{arXiv preprint}, 2017.

\bibitem[Maler and Nickovic(2004)]{maler2004}
Oded Maler and Dejan Nickovic.
\newblock Monitoring temporal properties of continuous signals.
\newblock In Yassine Lakhnech and Sergio Yovine, editors, \emph{Formal
  Techniques, Modelling and Analysis of Timed and Fault-Tolerant Systems},
  pages 152--166. Springer, 2004.
\newblock \doi{10.1007/978-3-540-30206-3_12}.
\newblock URL \url{http://dx.doi.org/10.1007/978-3-540-30206-3_12}.

\bibitem[Sadigh et~al.(2014)Sadigh, Kim, Coogan, Sastry, and
  Seshia]{sadigh2014}
Dorsa Sadigh, Eric~S Kim, Samuel Coogan, S~Shankar Sastry, and Sanjit~A Seshia.
\newblock A learning based approach to control synthesis of markov decision
  processes for linear temporal logic specifications.
\newblock In \emph{IEEE Conf. on Decision and Control}, pages 1091--1096. IEEE,
  2014.

\bibitem[Sutton and Barto(1998)]{sutton1998}
Richard~S Sutton and Andrew~G Barto.
\newblock \emph{Reinforcement learning: An introduction}, volume~1.
\newblock MIT press Cambridge, 1998.

\bibitem[Tsitsiklis(1994)]{tsitsiklis1994}
John~N Tsitsiklis.
\newblock Asynchronous stochastic approximation and q-learning.
\newblock \emph{Machine Learning}, 16\penalty0 (3):\penalty0 185--202, 1994.

\bibitem[Watkins and Dayan(1992)]{watkins1992}
Christopher~JCH Watkins and Peter Dayan.
\newblock Q-learning.
\newblock \emph{Machine learning}, 8\penalty0 (3-4):\penalty0 279--292, 1992.

\bibitem[Xu and Topcu(2019)]{Zhe2019_long}
Zhe Xu and Ufuk Topcu.
\newblock Transfer of temporal logic formulas in reinforcement learning.
\newblock \emph{arXiv preprint}, 2019.

\end{thebibliography}

\section{Appendix}
\textbf{\em Log-Sum-Exp \em Approximation:}
Given $n$ data points $x_1,x_2,...x_n$ with the objective of finding maximum/minimum of the sequence.  The maximum/minimum can be approximated as follows:
\begin{equation}
\begin{aligned}
\max (x_1,\dots,x_n) \sim \frac{1}{\beta} \log \sum_{i=1}^n e^{\beta x_i},\\
\min (x_1,\dots,x_n) \sim -\frac{1}{\beta} \log \sum_{i=1}^n e^{-\beta x_i},
\end{aligned}
\label{eq:softmax}
\end{equation}
where $\beta>0$ is a constant. The error bounds of this approximation is as follows:
\begin{equation}
\begin{aligned}
\max (x_1,\dots,x_n) \leq \frac{1}{\beta}\log \sum_{i=1}^n e^{\beta x_i} \leq \max (x_1,\dots,x_n) + \frac{1}{\beta}\log n,\\
\min (x_1,\dots,x_n) - \frac{1}{\beta}\log n \leq -\frac{1}{\beta}\log \sum_{i=1}^n e^{-\beta x_i} \leq \min (x_1,\dots,x_n).
\end{aligned}
\label{eq:softmaxgap}
\end{equation}
With arbitrarily large $\beta$, the approximation becomes the exact solution.\\\\\\
\textbf{Proof 1:}
Using \em log-sum-exp \em approximation, we get
\begin{equation}
\begin{aligned}
g(\pmb{\sigma})=
\begin{cases}
    \max\limits_{t \in [\tau-1,T]} \sigma_{t-\tau+1:t} \models \phi \\
    \min\limits_{t \in [\tau-1,T]} \sigma_{t-\tau+1:t} \models \phi 
\end{cases} \approx \quad \hat{g}(\pmb{\sigma}) = 
\begin{cases}
    \frac{1}{\beta}\log \sum\limits_{t = \tau-1}^T e^{\beta (\sigma_{t-\tau+1:t} \models \phi)} & \text{if $\Phi = F_{[0,T]}\phi $}\\
    -\frac{1}{\beta}\log \sum\limits_{t = \tau-1}^T e^{-\beta (\sigma_{t-\tau+1:t} \models \phi)} & \text{if $\Phi = G_{[0,T]}\phi $}
\end{cases}
\end{aligned}
\end{equation}
Since $1/\beta$ is a constant and log(.) a strictly monotonous function. $\widehat{g}(\pmb{\sigma})$ with log(.) and $1/\beta$ dropped from $\hat{g}(\pmb{\sigma})$ can substitute $\hat{g}(\pmb{\sigma})$ in equation $\pi^* = \arg\max\limits_{\pi}E^{\pi}[\hat{g}(\pmb{\sigma}))]$ to get equation \eqref{pr:1a}.

Using \eqref{eq:softmaxgap} we can obtain the following inequality
\begin{align*}
E^{\pi} [\max\limits_{t \in [\tau-1,T]} (\sigma_{t-\tau+1:t} \models \phi)] & \leq E^{\pi} \Big[ \sum\limits_{t=\tau-1}^T e^{\beta sat(\sigma^F_t,\phi)} \Big] \\ & \leq E^{\pi} [\max\limits_{t \in [\tau-1,T]} (\sigma_{t-\tau+1:t} \models \phi)] + \frac{1}{\beta}\log (T-\tau+2)\\
 \end{align*}
\begin{align}
E^{\pi} [\min\limits_{t \in [\tau-1,T]} (\sigma_{t-\tau+1:t} \models \phi)] - \frac{1}{\beta}\log (T-\tau+2) & \leq E^{\pi} \Big[- \sum\limits_{t=\tau-1}^T e^{-\beta sat(\sigma^F_t,\phi)} \Big] \nonumber \\ & \leq E^{\pi} [\min\limits_{t \in [\tau-1,T]} (\sigma_{t-\tau+1:t} \models \phi)]
\label{eq:ineq}
\end{align}
In \eqref{eq:ineq} we use policy $\pi = \pi^*_1,\pi^*_3$, which along with the following inequalities 

\begin{align*}
E^{\pi^*_3} [\max\limits_{t \in [\tau-1,T]} (\sigma_{t-\tau+1:t} \models \phi)] & \leq E^{\pi^*_1} [\max\limits_{t \in [\tau-1,T]} (\sigma_{t-\tau+1:t} \models \phi)] \\ 
E^{\pi^*_3} [\min\limits_{t \in [\tau-1,T]} (\sigma_{t-\tau+1:t} \models \phi)] & \leq E^{\pi^*_1} [\min\limits_{t \in [\tau-1,T]} (\sigma_{t-\tau+1:t} \models \phi)] \\ 
E^{\pi^*_1} \Big[ \sum\limits_{t=\tau-1}^T e^{\beta sat(\sigma^F_t,\phi)} \Big] & \leq E^{\pi^*_3} \Big[ \sum\limits_{t=\tau-1}^T e^{\beta sat(\sigma^F_t,\phi)} \Big] 
\end{align*}
\begin{align}
E^{\pi^*_1} \Big[ -\sum\limits_{t=\tau-1}^T e^{-\beta sat(\sigma^F_t,\phi)} \Big] & \leq E^{\pi^*_3} \Big[ -\sum\limits_{t=\tau-1}^T e^{-\beta sat(\sigma^F_t,\phi)} \Big]
\label{eq:relate}
\end{align}

can be solved together to obtain \eqref{eq:dist}.

\end{document}